\newtheorem{theorem}{Theorem}[section]
\newtheorem{definition}{Definition}
\newcommand{\tensor}[1]{\mathcal{#1}}
\newcommand{\T}{\tensor{T}}
\newcommand{\R}{\mathbb{R}}
\begin{document}

\title{Higher Order Transformers: Enhancing Stock Movement Prediction On Multimodal Time-Series Data}

\author{Soroush Omranpour}
\affiliation{
  \institution{Mila \& McGill University}
  \city{Montreal}
  \country{Canada}
}
\email{soroush.omranpour@mila.quebec}

\author{Guillaume Rabusseau}
\affiliation{
  \institution{Mila \& University of Montreal}
  \city{Montreal}
  \country{Canada}
}
\email{rabussgu@mila.quebec}

\author{Reihaneh Rabbany}
\affiliation{%
  \institution{Mila \& McGill University}
  \city{Montreal}
  \country{Canada}
}
\email{reihaneh.rabbany@mila.quebec}

\begin{abstract}
In this paper, we tackle the challenge of predicting stock movements in financial markets by introducing Higher Order Transformers, a novel architecture designed for processing multivariate time-series data. We extend the self-attention mechanism and the transformer architecture to a higher order, effectively capturing complex market dynamics across time and variables. To manage computational complexity, we propose a low-rank approximation of the potentially large attention tensor using tensor decomposition and employ kernel attention, reducing complexity to linear with respect to the data size. Additionally, we present an encoder-decoder model that integrates technical and fundamental analysis, utilizing multimodal signals from historical prices and related tweets. Our experiments on the Stocknet dataset demonstrate the effectiveness of our method, highlighting its potential for enhancing stock movement prediction in financial markets.
\end{abstract}

\keywords{Stock Movement Prediction, Multivariate Time-Series, Self-Attention, Transformer, Tensor Decomposition, Kernel Attention, Multimodal Signals}

\maketitle

\section{Introduction}
Predicting stock movements in financial markets is of paramount importance for investors and traders alike, as it enables informed decision-making and enhances profitability. However, this task is inherently challenging due to the stochastic nature of market dynamics, the non-stationarity of stock prices, and the influence of numerous factors beyond historical prices, such as social media sentiment and inter-stock correlations.

Traditional approaches in stock prediction have primarily focused on technical analysis (TA) and fundamental analysis (FA), leveraging historical price data and key financial metrics, respectively \cite{yin2016predicting}. While these methods have provided valuable insights, they often fail to capture the complex interdependencies and the high-dimensional structure of financial data \cite{bollerslev1986generalized}. Recent advancements in machine learning, particularly in natural language processing and graph neural networks, have begun to address these limitations by integrating multimodal data sources, such as news articles and social media sentiment, thereby offering a more nuanced understanding of market dynamics \cite{hu2018listening, kim2019hats}.

Despite these advancements, existing models still struggle with the sheer volume and variability of financial data, often resulting in suboptimal predictive performance when dealing with high-dimensional, multivariate time-series data. To address these challenges, we introduce a novel architecture called Higher Order Transformers. This architecture extends the traditional transformer model by incorporating higher-order data structures in the self-attention mechanism, enabling it to capture more complex interrelationships across time and variables.

The contributions of this paper can be summarized as follows:
\begin{itemize}
\item We introduce \emph{Higher-Order Transformers}, a novel architecture tailored for processing higher-order data (with more than two dimensions where the last dimension works as the feature dimension).
\item We propose a \emph{Multimodal Encoder-Decoder} architecture based on the proposed Higher Order Transformer which integrates both news text and historical price data to provide a holistic understanding of the market dynamics.
\item Following a series of experiments and ablation studies, we demonstrate the effectiveness of the proposed method empirically.
\end{itemize}

\section{Related Work}
Predicting stock movements has traditionally utilized technical analysis (TA), focusing on historical price data and macroeconomic indicators, with common methods like GARCH and neural networks being prevalent for their ability to identify temporal patterns \cite{yin2016predicting, bollerslev1986generalized, yildirim2021forecasting}. However, such methods often fail to account for external factors that can significantly influence market dynamics, thus limiting their overall predictive scope.

Fundamental analysis (FA) attempts to fill this gap by integrating broader market elements such as investor sentiment and news, with the help of advancements in natural language processing (NLP). NLP has enabled more effective sentiment analysis from diverse data sources like news articles and social media, thereby enhancing the FA approaches \cite{kalyani2016stock, ding2015deep, abarbanell1997fundamental, hu2018listening, xu2018stock}. Innovative models such as the Hybrid Attention Networks (HAN) and StockNet have emerged, which blend attention mechanisms and variational auto-encoders to analyze text alongside price data, though the assumption of independent stock movements often hampers their effectiveness \cite{hu2019listening, xu2018stock}.

Graph Neural Networks (GNNs) have been introduced as a solution to address the interconnected nature of financial markets, by structuring market data into graph formats where nodes represent companies, allowing for enhanced data representation and learning through contextual inter-node relationships \cite{feng2019temporal, sawhney2020spatiotemporal, matsunaga2019exploring, kim2019hats, yangjia-etal-2022-fundamental}. These networks are particularly effective when combined with attention mechanisms and have been expanded upon with financial knowledge graphs to incorporate domain-specific knowledge \cite{kertkeidkachorn2023finkg}.

Despite progress in graph-based and NLP-enhanced stock prediction methods, fully integrated multimodal approaches that leverage both textual data and inter-stock relationships are still in their infancy. Recent proposals for such multimodal models aim to harness these diverse data sets to improve the predictive accuracy of stock movement models significantly \cite{daiya2021stock, li2020multimodal, sawhney2020deep, kim2019hats, liu-etal-2019-transformer, ZHANG2022117239, xie2023wall}. These efforts underscore ongoing challenges and the potential for future advancements in capturing complex market signals and correlations.

\section{Problem Definition}

In line with previous studies \cite{xu2018stock, sawhney2020deep}, we define the stock movement prediction task as a binary classification problem. For a given stock \( s \), the price movement from day \( t \) to \( t+1 \) is defined as follows:
\begin{equation}
    Y_t = 
    \begin{cases}
      0 & \text{if } p^c_t < p^c_{t-1} \\
      1 & \text{otherwise}
    \end{cases}
\end{equation}
where \( p^c_t \) represents the adjusted closing price\footnote{\url{https://www.investopedia.com/terms/a/adjusted_closing_price.asp}} on day \( t \). Here, \( Y_t = 0 \) indicates that the stock price has decreased, and \( Y_t = 1 \) indicates that the stock price has increased.

The objective of this task is to predict the price movement \( Y_{T+1} \) of a stock \( s \) based on its historical price data and related tweets within a time window of \( T \) days.\\

\section{Preliminary}
Let us define some notations. We use lower-case letters for vectors (\textit{e.g.,} $v$), upper-case letters for matrices (\textit{e.g.,} $M$), and calligraphic letters for tensors (\textit{e.g.,}$\tensor{T}$). We use $\otimes$ to denote the Kronecker product and $\times_i$ for the tensor product along mode $i$. The notation $[k]$ denotes the set $\{1,\cdots,k\}$ for any integer $k$. 

\begin{definition}[Tensor]
A $k$-th order tensor $\tensor{T}\in \mathbb{R}^{N_1\times N_2 \times ... \times N_k}$ can simply be seen as a multidimensional array.
\end{definition}

\begin{definition}[Tensor Mode and Fibers]
The mode-$i$ fibers  of $\tensor{T}$ are vectors obtained by fixing all indices except the $i$-th one: $\tensor{T}_{n_1,n_2,...,n_{i-1},:,n_{i+1},...,n_k}\in \R^{N_i}$.
\end{definition}

\begin{definition}[Tensor slice]
A tensor slice is a two-dimensional section (fragment) of a tensor, obtained by fixing all indexes except for two indices.
\end{definition}

\begin{definition}[Tensor Matricization]
The $i$-th mode matricization of a tensor is the matrix having its mode-$i$ fibers as columns and is denoted by $\T_{(i)}$, e.g., $\T_{(1)}\in \R^{N_1\times N_2\cdots N_k}$.
\end{definition}

\begin{definition}[Kronecker Product]
The Kronecker product, denoted by \(\otimes\), is a binary operation that combines two matrices or tensors to form a larger matrix or tensor. For two matrices \(A \in \R^{m \times n}\) and \(B \in \R^{p \times q}\), their Kronecker product \(A \otimes B\) is defined as:
\[
A \otimes B = \begin{bmatrix}
a_{11}B & \cdots & a_{1n}B \\
\vdots & \ddots & \vdots \\
a_{m1}B & \cdots & a_{mn}B
\end{bmatrix}
\in \mathbb{R}^{mp \times nq}
\]
\end{definition}

In this paper, we consider multivariate timeseries data in the form of third-order tensors $\tensor{X} \in \R^{N \times T \times d}$ where $N$ is the number of variables, $T$ is the number of timesteps, and $d$ is the number of hidden/feature dimensions.  \\

\section{Method}
\subsection{Tokenization}
In this section, we explain the process of tokenizing the input multivariate time-series data. Following prior work \cite{sawhney2020deep}, we construct a price vector for each stock \(i\) at each day \(t\) in the form of \(x_{i,t} = [p^c_{i,t}, p^h_{i,t}, p^l_{i,t}]\), comprising the stock's adjusted closing price, highest price, and lowest price. 

In line with the multivariate time-series forecasting literature \cite{rasul2024lagllama, zhou2021informer}, we also found it beneficial to add date features, including the day of the month, month of the year, and year. The combination of price and date features forms a six-dimensional vector for each stock on each day.

Inspired by \cite{darcet2024vision}, we add stock-specific learnable tokens to the beginning of each time-series and treat them as the common CLS token in transformer encoders. Similar to BERT \cite{devlin2019bert} and ViT \cite{dosovitskiy2021image}, we used the hidden state of this special token as the stock representation over the whole time window for the classification task.\\

\subsection{Higher Order Transformer}
In this section, we first review the self-attention mechanism in Transformer layers \cite{vaswani2023attention}. Then, we extend it to higher orders by tensorizing queries, keys, and values, thereby formulating higher order transformer layers. Given that computing attention on tensors is prohibitively costly, we propose a low-rank approximation using Kronecker decomposition. Additionally, we incorporate the attention kernel trick \cite{choromanski2022rethinking} to significantly reduce the computational complexity.\\

\subsubsection{Standard Transformer Layer\\}
A Transformer encoder layer comprises two primary components: a self-attention layer \(g_\text{Attn} : \mathbb{R}^{n \times d} \to \mathbb{R}^{n \times d}\) and an elementwise feedforward layer \(g_\text{MLP} : \mathbb{R}^{n \times d} \to \mathbb{R}^{n \times d}\). For a set of \(n\) input vectors \(X \in \mathbb{R}^{n \times d}\), a Transformer layer computes the following\footnote{We omitted normalization after \(g_\text{Attn}(\cdot)\) and \(g_\text{MLP}(\cdot)\) for simplicity.}:

\begin{align}
    g_\text{Attn}(X)_i &= X_i + \sum_{h=1}^H \sum_{j=1}^n S^h_{ij} X_j W_h^V W_h^O, \\
    f_\text{Enc}(X)_i &= g_\text{Attn}(X)_i + g_\text{MLP}(g_\text{Attn}(X))_i,
\end{align}

where \(i\) denotes the row index of the matrix, \(H\) is the number of attention heads, and \(d_H\) is the head size. The weight matrices \(W_h^V, W_h^K, W_h^Q \in \mathbb{R}^{d \times d_H}\) and \(W_h^O \in \mathbb{R}^{d_H \times d}\) are associated with each attention head. The attention coefficients are computed as:

\begin{equation}
    S^h = \text{Softmax}\left(\frac{X W_h^Q (X W_h^K)^T}{\sqrt{d}}\right),
\end{equation}
where the Softmax function is applied row-wise.

\subsubsection{Higher Order Transformer Layer\\}
To generalize the formulation of scaled dot-product attention to higher-order input data \(\tensor{X} \in \mathbb{R}^{N \times T \times d}\), we extend the concept of attention to tensors. We define the transformer encoder layer as a function \(f_{\text{Enc}}: \mathbb{R}^{N \times T \times d} \to \mathbb{R}^{N \times T \times d}\), formulated as follows:

\begin{align}
    g_\text{Attn}(\tensor{X})_{it} &= \tensor{X}_{it} + \sum_{h=1}^H \sum_{j, \tau} \tensor{S}^h_{ijt\tau} W_h^V W_h^O \tensor{X}_{j\tau}, \\
    f_\text{Enc}(\tensor{X})_{it} &= g_\text{Attn}(\tensor{X})_{it} + g_\text{MLP}(g_\text{Attn}(\tensor{X}))_{it},
\end{align}
where \(\tensor{X}_{it} \in \mathbb{R}^{d}\) denotes the third-mode fiber of \(\tensor{X}\) corresponding to variable \(i\) at timestep \(t\). The attention scores are computed as:

\begin{equation}
\label{eq:high_order_attn}
    \tensor{S}^h_{ijt\tau} = \frac{e^{z^h_{ijt\tau}}}{\sum_{j,\tau} e^{z^h_{ijt\tau}}},
\end{equation}
where $\tensor{S}^h_{ijt\tau}$ denotes the attention score for variable $i$ at time $t$ attending to variable $j$ at time $\tau$ on head $h$ and \(z^h_{ijt\tau}\) is computed as:
\begin{equation}
    z^h_{ijt\tau} = \frac{(W_h^Q \tensor{X}_{it})^T W_h^K \tensor{X}_{j\tau}}{\sqrt{d}}.
\end{equation}

Computing the attention tensor \(\tensor{S} \in \mathbb{R}^{N \times N \times T \times T}\) requires \(\mathcal{O}(N^2 T^2 d)\) operations, which can be computationally expensive in most cases.\\

\subsubsection{Low-Rank Approximation With Kronecker Decomposition\\}

Consider the attention matrix \( S \in \mathbb{R}^{NT \times NT} \) as the result of reshaping the tensor \(\tensor{S}\) by flattening the variable and time dimensions. We parameterize the attention matrix \( S \) using a rank \( R \) Kronecker decomposition with factor matrices \( S^{(1)}_i \in \mathbb{R}^{N \times N} \) and \( S^{(2)}_i \in \mathbb{R}^{T \times T} \):
\begin{align}
    \label{eq:att_approx}
    S \approx \sum_{i=1}^R (S^{(1)}_i \otimes S^{(2)}_i),
\end{align}
where \( S^{(1)}_i \) and \( S^{(2)}_i \) represent lower-order attention matrices over the variable and time dimensions, respectively. Having \( R \) attention matrices acts similarly to multi-head attention with \( R \) heads. In practice, we adopt the multi-head attention mechanism instead of the summation shown in Equation \eqref{eq:att_approx}, as it uses more parameters and can be more expressive.

\begin{figure}
    \centering
    \includegraphics[width=\linewidth]{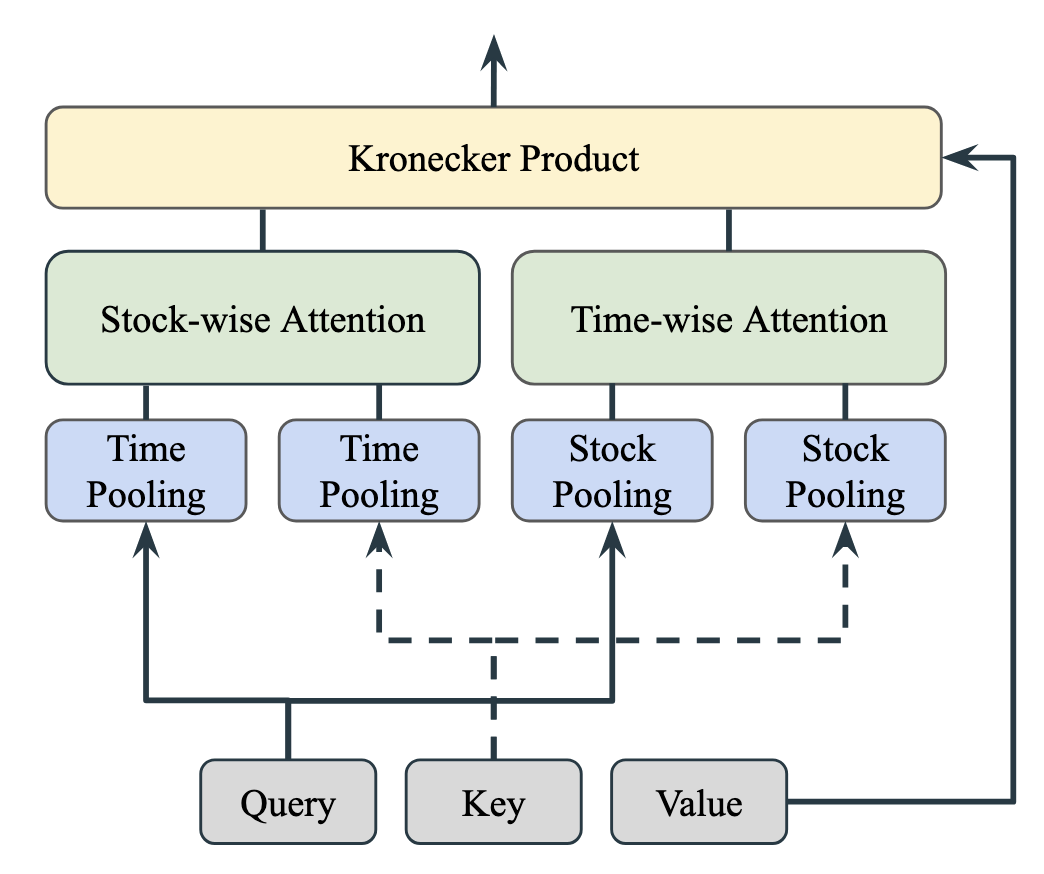}
    \caption{The overview of High Order Attention using Kronecker decomposition. }
    \label{fig:ho_att}
\end{figure}

\begin{theorem}
Given any fourth-order attention tensor \(\tensor{A} \in \R^{N \times N \times T \times T}\), which can be reshaped into a matrix \( A \in \R^{NT \times NT}\), there exists a rank \( R \) such that matrix \( A \) can be expressed as the sum of Kronecker products of matrices \( B_i \in \mathbb{R}^{N \times N} \) and \( C_i \in \mathbb{R}^{T \times T} \). Formally:
\begin{equation}
    A = \sum_{i=1}^R (B_i \otimes C_i),
\end{equation}
for some \( R \leq \min(N^2, T^2) \). As \( R \) approaches \(\min(N^2, T^2)\), the approximation becomes exact, meaning the Kronecker decomposition is capable of universally representing the original attention tensor \(\tensor{A}\) and matrix \( A \).
\end{theorem}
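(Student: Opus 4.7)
The plan is to prove this via the classical Van Loan rearrangement trick, which reduces the Kronecker product decomposition problem to a standard matrix rank problem solved by the singular value decomposition.

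First, I would define a linear rearrangement operator $\mathcal{R} : \mathbb{R}^{NT \times NT} \to \mathbb{R}^{N^2 \times T^2}$ that reshuffles entries of $A$ by block-vectorization: each $T \times T$ block of $A$ (indexed by a pair of outer indices $(i,j) \in [N] \times [N]$) is vectorized and placed as a row of $\mathcal{R}(A)$. The crucial identity to verify is that for any $B \in \mathbb{R}^{N \times N}$ and $C \in \mathbb{R}^{T \times T}$,
\begin{equation}
    \mathcal{R}(B \otimes C) = \mathrm{vec}(B)\,\mathrm{vec}(C)^T,
\end{equation}
which is a rank-one matrix. This follows by directly unfolding the block structure of the Kronecker product definition provided in the preliminaries.

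Next, I would exploit linearity of $\mathcal{R}$: if $A = \sum_{i=1}^R B_i \otimes C_i$, then $\mathcal{R}(A) = \sum_{i=1}^R \mathrm{vec}(B_i)\,\mathrm{vec}(C_i)^T$, so the minimum such $R$ equals the standard matrix rank of $\mathcal{R}(A)$. Since $\mathcal{R}(A) \in \mathbb{R}^{N^2 \times T^2}$, its rank is at most $\min(N^2, T^2)$.

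For the constructive converse, I would apply the singular value decomposition to $\mathcal{R}(A)$, writing $\mathcal{R}(A) = \sum_{i=1}^{R} \sigma_i u_i v_i^T$ where $R = \mathrm{rank}(\mathcal{R}(A))$. Reshaping $\sqrt{\sigma_i}\, u_i$ into $B_i \in \mathbb{R}^{N \times N}$ and $\sqrt{\sigma_i}\, v_i$ into $C_i \in \mathbb{R}^{T \times T}$ and applying $\mathcal{R}^{-1}$ recovers $A = \sum_{i=1}^R B_i \otimes C_i$ exactly. In particular, taking $R = \min(N^2, T^2)$ always suffices, establishing the universality claim.

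The main obstacle is the bookkeeping required to define $\mathcal{R}$ cleanly and verify the rank-one identity for $\mathcal{R}(B \otimes C)$; once that identity is in hand, the rest is immediate from the SVD. A minor subtlety is making sure the reshaping conventions used to go from the fourth-order tensor $\tensor{A}$ to the matrix $A$ are consistent with those used by $\mathcal{R}$, so that the final statement genuinely expresses the original $\tensor{A}$ as a sum of Kronecker products.
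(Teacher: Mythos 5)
Your proposal is correct and follows essentially the same route as the paper's own proof: rearrange $A$ into an $N^2 \times T^2$ matrix, take its SVD, and reshape the (scaled) singular vectors into the Kronecker factors $B_i$ and $C_i$. The only difference is that you make explicit the Van Loan rearrangement identity $\mathcal{R}(B \otimes C) = \mathrm{vec}(B)\,\mathrm{vec}(C)^T$, which the paper asserts implicitly when it says the outer product ``corresponds to the Kronecker product when flattened.''
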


%%%%%%%%%%%%%%%%%%%%%%%%%%%%%%%%%%%%%%%%%%%%%%%

\begin{proof}
We begin by reshaping the tensor $\tensor{A}$ into a matrix $\Tilde{A} \in \mathbb{R}^{N^2 \times T^2}$. Consider   the Singular Value Decomposition (SVD) of $\Tilde{A}$:
\begin{equation}
    \Tilde{A} = U \Sigma V^T,
\end{equation}
where $U \in \mathbb{R}^{N^2 \times N^2}$ and $V \in \mathbb{R}^{T^2 \times T^2}$ are orthogonal matrices, and $\Sigma \in \mathbb{R}^{N^2 \times T^2}$ is a diagonal matrix containing the non-negative singular values of $\Tilde{A}$. This decomposition allows $\Tilde{A}$ to be represented as a sum of outer products of singular vectors scaled by the corresponding singular values, i.e.,
\begin{equation}
    \Tilde{A} = \sum_{i=1}^{min(N^2, T^2)} \sigma_i u_i v_i^T,
\end{equation}
where $\sigma_i$ are the singular values, and $u_i \in \mathbb{R}^{N^2}, v_i \in \mathbb{R}^{T^2}$ are the corresponding left and right singular vectors. For a low-rank approximation, only the first $R$ terms are retained:
\begin{equation}
    \Tilde{A} \approx \sum_{i=1}^R \sigma_i u_i v_i^T.
\end{equation}

Now, each product $\sigma_i u_i v_i^T$ can be reshaped from vectors $u_i$ and $v_i$ into matrices. Specifically, vector $u_i$ is reshaped into a matrix $\hat{U}_i \in \mathbb{R}^{N \times N}$, and $v_i$ into $\hat{V}_i \in \mathbb{R}^{T \times T}$. The outer product $\sigma_i u_i v_i^T$ corresponds to the Kronecker product of $\hat{U}_i$ and $\hat{V}_i$ when flattened:
\begin{equation}
    \sigma_i u_i v_i^T \rightarrow \sigma_i (\hat{U}_i \otimes \hat{V}_i).
\end{equation}

The matrices $B_i = \sqrt{\sigma_i} \hat{U}_i$ and $C_i = \sqrt{\sigma_i} \hat{V}_i$ are then defined, so that $\sigma_i (\hat{U}_i \otimes \hat{V}_i)$ simplifies to $B_i \otimes C_i$. This allows us to express the approximate decomposition of $\Tilde{A}$ as:
\begin{equation}
    \Tilde{A} \approx \sum_{i=1}^R B_i \otimes C_i,
\end{equation}
where $B_i \in \mathbb{R}^{N \times N}$ and $C_i \in \mathbb{R}^{T \times T}$. As $R$ approaches the rank of $\Tilde{A}$, this approximation converges to an exact representation. Thus, we verify the theorem's claim that any attention matrix derived from tensor $\tensor{A}$ can be expressed in terms of Kronecker products, achieving universal representability at full rank.
\end{proof}

%%%%%%%%%%%%%%%%%%%%%%%%%%%%%%%%%%%%%%%%

Now we delve into the computation of the lower-order attention matrices \(S^{(1)}\) and \(S^{(2)}\). For simplicity, we omit the index \(i\). We project the input tensor \(\tensor{X}\) onto the query, key, and value tensors using mode-3 tensor products: \(\tensor{Q} = \tensor{X} \times_3 W^Q\), \(\tensor{K} = \tensor{X} \times_3 W^K\), and \(\tensor{V} = \tensor{X} \times_3 W^V\) respectively. Utilizing permutation-invariant pooling functions \(f: \mathbb{R}^{N \times T \times d} \to \mathbb{R}^{N \times d}\) and \(g: \mathbb{R}^{N \times T \times d} \to \mathbb{R}^{T \times d}\), we define the attention matrices as:
\begin{align}
    \label{eq:first-order-att}
    S^{(1)} &= \text{Softmax}\left(\frac{f(\tensor{Q}) f(\tensor{K})^T}{\sqrt{d}}\right),\\
    S^{(2)} &= \text{Softmax}\left(\frac{g(\tensor{Q}) g(\tensor{K})^T}{\sqrt{d}}\right).
\end{align}
Common examples of permutation-invariant pooling functions include sum, mean, and product.

The attention matrices are then used to modulate the value tensor \(\tensor{V}\). To maintain computational efficiency, we apply the attention matrices sequentially as follows:
\begin{equation}
    \label{eq:ho_attn}
    g_\text{Attn}(\tensor{Q}, \tensor{K}, \tensor{V}) = \tensor{V} \times_1 S^{(1)} \times_2 S^{(2)}
\end{equation}

Computing and applying these attention matrices incurs a computational cost of \(\mathcal{O}(d(N^2T + T^2N))\), where the quadratic terms reflect the inherent computational demand of the scaled dot-product attention mechanism, which can become substantial. To mitigate these costs, we employ kernelized linear attention, which achieves linear computational complexity of \(\mathcal{O}(d^2NT)\).\\

\subsubsection{Linear Attention With Kernel Trick\\}

Following the work by \cite{choromanski2022rethinking}, we approximate the attention matrix \(S^{(1)}\) in equation \eqref{eq:first-order-att} using a suitable kernel function \(\phi: \mathbb{R}^{N \times d} \to \mathbb{R}^{N \times d}\) as:
\begin{equation}
    S^{(1)} \approx Z_1^{-1} \phi(f(\tensor{Q})) \phi(f(\tensor{K}))^T,
\end{equation}
where \(Z_1 \in \mathbb{R}^{N \times N}\) is a diagonal matrix serving as a normalizing factor. The diagonal elements of \(Z_1\) are defined by:
\begin{equation}
    (Z_1)_{ii} = \phi(f(\tensor{Q}))_i \sum_{j \in [N]} \phi(f(\tensor{K}))_j^T,
\end{equation}
Substituting into equation \eqref{eq:ho_attn}, we compute:
\begin{equation}
    (\tensor{V} \times_1 S^{(1)})_{:t:} = Z_1^{-1} \phi(g_1(\tensor{Q})) \phi(g_1(\tensor{K}))^T \tensor{V}_{:t:},
\end{equation}
where \(\tensor{V}_{:t:}\) denotes the tensor slice corresponding to the \(t\)-th timestep. The computational sequence initiates by calculating \(\phi(f(\tensor{K}))^T \tensor{V}_{:t:}\) followed by the subsequent operations, yielding a computational complexity of \(\mathcal{O}(d^2 NT)\). The choice of kernel function \(\phi\) is flexible, and we utilize the same kernel function as in \cite{choromanski2022rethinking}, which has been validated both theoretically and empirically. Similarly, the same computational approach is applied to compute \(S^{(2)}\) using the same kernel function \(\phi\).\\

\subsection{Model Architecture}
\label{sec:arch}
Our model's architecture consists of a multilayer transformer network. Input tensors are transformed through a linear projection layer to align the features with the hidden dimensions required by the model and its attention modules. We adopt pre-normalization techniques, specifically RMSNorm \citep{zhang2019root}, in every layer following the approach suggested by \citet{touvron2023llama}. Rotary Positional Embedding \cite{su2023roformer} is applied to the query and key matrices exclusively for computing the temporal attention \(S^{(2)}\); stock-wise attention does not involve positional embeddings as the ordering in this dimension is not meaningful. 

 Inspired by \cite{hu2021unit, tsai2019multimodal}, our proposed multimodal model follows an encoder-decoder architecture where the modality of the data differs across encoder and decoder. More specifically, text encodings are processed by the transformer encoder, and price timeseries data by the transformer decoder as presented in figure \ref{fig:architecture}. Cross-attention layers in the network facilitate information blending between these two modalities.\\

\begin{figure}
    \centering
    \includegraphics[width=\linewidth]{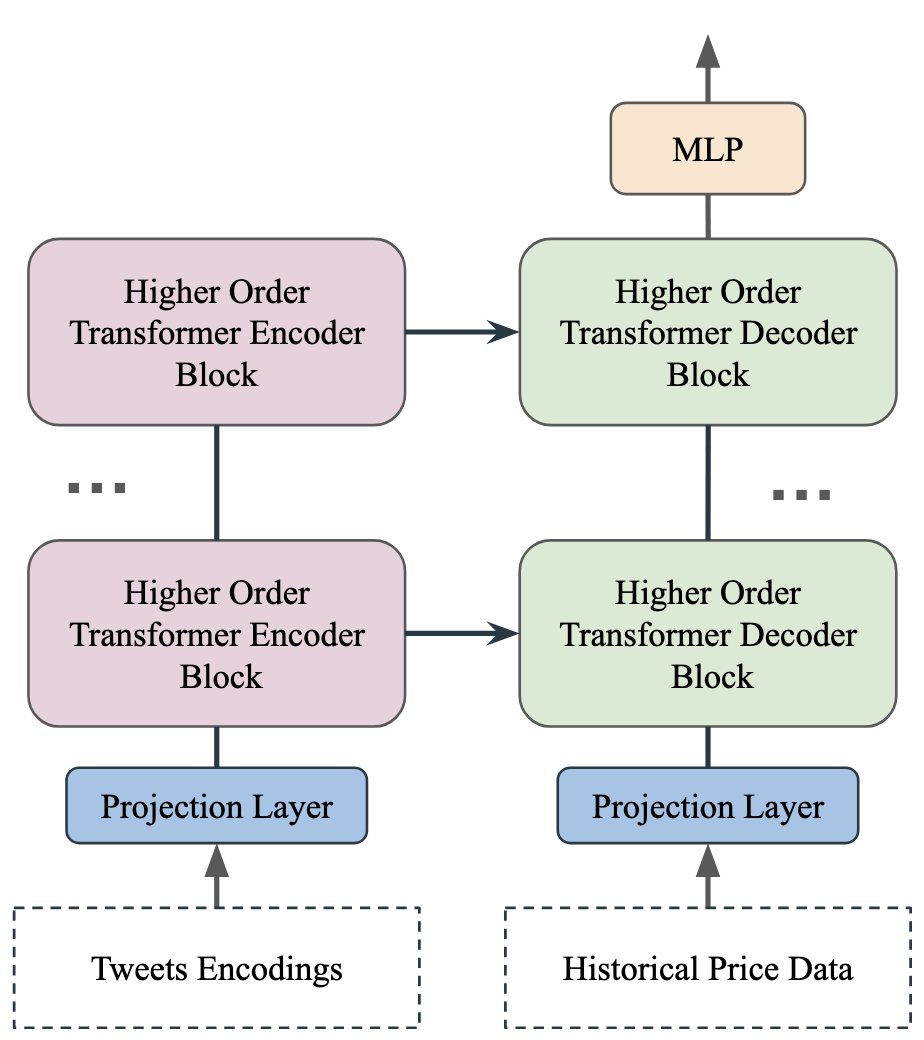}
    \caption{Multimodal transformer architecture. As depicted in the figure, tweet encodings are fed to the transformer encoder, and the historical price data are given to the transformer decoder. }
    \label{fig:architecture}
\end{figure}

\section{Experiments}
\subsection{Dataset}
We demonstrate the capability of Higher Order Transformers on stock market movement classification using a multimodal dataset called Stocknet \cite{xu2018stock} which comprises historical data from 88 stocks extracted from Yahoo Finance\footnote{https://finance.yahoo.com/} and related news crawled from Twitter over two years. We adopt the data processing methodology as outlined in \cite{sawhney2020deep, xu2018stock}, shifting a 5-day lag window along trading days to generate samples. Samples are labeled based on the movement percentage of the closing price, with movements \(\geq 0.55\%\) labeled as positive and \(\leq 0.5\%\) as negative. Samples lacking either prices or tweets on trading days are discarded to ensure data consistency.

Text encodings are generated using the FinBert model \cite{araci2019finbert} by processing all tweets for each stock on each day through the transformer encoder, then averaging over tokens and tweets to produce a vector representation for each stock per day.

The dataset is temporally divided in a 70:10:20 ratio for train, validation, and test splits, spanning from 01/01/2014 to 31/07/2015 for training, 01/08/2015 to 30/09/2015 for validation, and 01/10/2015 to 01/01/2016 for testing.

\subsection{Model Configuration}
We configured the models using the Adam optimizer with an initial learning rate of \(0.0001\). Tweet embeddings from FinBert are dimensioned at 768. Training extends for up to 1000 epochs with an early stopping criterion based on validation F1 performance, stopping further training when no improvement is observed. Hyperparameter tuning is conducted through a grid search with the following parameters:
\begin{itemize}
    \setlength{\itemsep}{0pt}
    \item Hidden dimension: [32, 64, 128]
    \item Number of attention heads: [1, 4, 8, 16]
    \item Number of transformer blocks: [2, 4, 6]
    \item Dropout probability: [0, 0.2, 0.4]
\end{itemize}
Lastly, we report the performance of the best model based on validation F1 on the test set.

\subsection{Metrics}
In line with prior research in stock prediction \cite{sawhney2020deep, xu2018stock}, we evaluate classification performance using three metrics: accuracy, F1 score, and Matthew's correlation coefficient (MCC) \footnote{Using Torchmetrics library: \url{https://github.com/Lightning-AI/torchmetrics}}. We particularly include MCC as it provides a more balanced performance measure that accounts for both positive and negative classes and adjusts for any class imbalance by incorporating True Negatives. This makes it especially valuable in contexts where the data may be skewed. MCC is defined as:

\begin{equation}
     MCC = \frac{(TP \times TN) - (FP \times FN)}{\sqrt{(TP + FP)(TP + FN)(TN + FP)(TN + FN)}}
\end{equation}

This formula captures the correlation between observed and predicted binary classifications and is regarded as a reliable statistical rate that can deliver a high score only if the predictor successfully predicts the majority of positive and negative instances correctly.

\subsection{Baselines}
We compare our results with the following baselines:
\begin{itemize}
    \setlength{\itemsep}{5pt}
    \item \textbf{RAND}: A simple predictor to make random guesses about the rise and fall.
    \item \textbf{ARIMA}: AutoRegressive Integrated Moving Average, an advanced technical analysis method using only price signals. \cite{brown2004smoothing}
    \item \textbf{HAN}: Hybrid Attention Networks, is a model developed to predict stock trends by analyzing sequences of recent news related to the market. It employs attention mechanisms and self-paced learning to handle the chaotic nature of online content effectively.\cite{hu2019listening}
    \item \textbf{StockNet}: A Variational Auto-Encoder that jointly exploits text and price signals to predict stock movement. \cite{xu2018stock}
    \item \textbf{MAN-SF}: Multipronged Attention Network uses tweets, price history, and inter-stock connections to train a GAT network that captures relevant signals across modalities. \cite{sawhney2020deep}
    \item \textbf{HATS}: A hierarchical graph attention network for stock movement prediction. \cite{kim2019hats}
    \item \textbf{Yangjia et al.}: An RNN-based model which uses a co-attention mechanism to capture the sufficient context information between text and prices across every day. \cite{yangjia-etal-2022-fundamental}
    \item \textbf{ChatGPT (Zero-Shot)}: Using proper prompting on ChatGPT in a zero-shot setting. \cite{xie2023wall}
    \item \textbf{CapTE}: A Transformer Encoder that extracts the deep semantic features of the social media and then captures the structural relationship of the texts through a capsule network. \cite{liu-etal-2019-transformer}
    \item \textbf{TEANet}: A Transformer encoder with multiple attention mechanisms that extracts features and fuses text data from social media with stock prices. \cite{ZHANG2022117239}

    \item \textbf{SLOT}: A self-supervised learning method that embeds stocks and tweets in a shared semantic space to predict stock price movements, overcoming tweet sparsity and noise issues. \cite{slot}
    \item \textbf{NL-LSTM}: A novel model that uses neutrosophic logic for more accurate sentiment analysis of social media data, combined with LSTM for forecasting based on sentiment and historical stock data. \cite{jtaer19010007}
\end{itemize}

\begin{table}[t]
  \centering
  \caption{Classification performance of our method and competitors. The best is in bold, and the second best is underlined. Our model shows the best performance in all evaluation metrics after NL-LSTM.}
  \label{tab:benchmark}
  \resizebox{\linewidth - 40px}{!}{
  \begin{tabular}{lccc}
    \toprule
    Method & ACC & F1 & MCC \\
    \midrule
    RAND & 50.9 & 50.2 & -0.002 \\
    ARIMA \citeyearpar{brown2004smoothing} & 51.4 & 51.3 & -0.021 \\
    $\text{ChatGPT}_{zs}$ \citeyearpar{xie2023wall} & 52.1 & 51.7 & 0.038 \\
    HATS \citeyearpar{kim2019hats} & 56.02 & 56.22 & 0.117 \\
    HAN \citeyearpar{hu2018listening} & 57.6 & 57.2 & 0.052 \\
    StockNet \citeyearpar{xu2018stock} & 58.2 & 57.5 & 0.081 \\
    SLOT \citeyearpar{slot} & 58.7 & - & 0.2065 \\
    MAN-SF \citeyearpar{sawhney2020deep} & 60.8 & 60.5 & 0.195 \\
    \citeauthor{yangjia-etal-2022-fundamental}  \citeyearpar{yangjia-etal-2022-fundamental} & 62.6 & 61.1 & 0.228 \\
    CapTE \citeyearpar{liu-etal-2019-transformer} & 64.22 & - & 0.348\\
    TEANet \citeyearpar{ZHANG2022117239} & 65.16 & - & 0.364\\
    \textbf{NL-LSTM} \citeyearpar{jtaer19010007} & \textbf{78.5} & - & \textbf{0.587}\\
    \hline
    \underline{Ours} & \underline{72.94} & \underline{72.23} & \underline{0.516} \\
    \bottomrule
  \end{tabular}
  }
\end{table}

\subsection{Results}
In this section, we analyze the benchmark performance of our model against various baseline models on the StockNet dataset. The results, as summarized in Table \ref{tab:benchmark}, demonstrate the superior performance of our model over all of the existing baselines except NL-LSTM, which significantly outperforms the rest of the methods in terms of accuracy, F1, and MCC. NL-LSTM reported the highest accuracy performance on binary stock movement prediction by incorporating fuzzy logic in sentiment analysis \cite{jtaer19010007}.

\begin{table}[t]
  \centering
  \caption{Ablation study on the data modality and attention study shows the effectiveness of using multimodal over unimodal data, and using kernelized over standard attention.}
  \label{tab:ablation_mod_type}
  \begin{tabular}{lcccc}
    \toprule
    Modality & Attention Type & ACC & F1 & MCC \\
    \midrule
    \multirow{2}{*}{Price} & Standard & 52.3 & 51.8 & 0.049 \\
     & Kernelized & 53.6 & 53.3 & 0.063 \\
    \hline
    \multirow{2}{*}{Tweet} & Standard & 69.5 & 68.0 & 0.320 \\
     & Kernelized & 71.0 & 70.1 & 0.402 \\
    \hline
    \multirow{2}{*}{Multimodal} & Standard & 70.8 & 70.5 & 0.391 \\
     & \textbf{Kernelized} & \textbf{72.9} & \textbf{72.2} & \textbf{0.516} \\
    \bottomrule
\end{tabular}
\end{table}

\subsection{Ablation Study}
We further investigate the impact of different aspects of our model through an ablation study, focusing on the types of attention mechanisms used, the data modalities, and the attention method. The results are provided in Tables \ref{tab:ablation_mod_type}, and \ref{tab:ablation_att} respectively.

The influence of data modality on performance is depicted in Table \ref{tab:ablation_mod_type}. The multimodal approach that integrates both price data and Twitter news significantly outperforms single-modality approaches, underscoring the benefit of leveraging diverse data sources. Moreover, the text-based models performs better than the timeseries-based models with a significant gap, showing the rich context present in the news data crawled from Twitter for the stock movement prediction task.

Table \ref{tab:ablation_mod_type} also explores the effect of using linear versus standard attention mechanisms over different modalities. The results highlight the advantages of linear attention in terms of efficiency and effectiveness, particularly in multimodal settings.

\begin{table}[ht]
  \centering
  \caption{Ablation study on the attention dimension shows the effectiveness of attention across both dimensions. }
  \label{tab:ablation_att}
  \begin{tabular}{lccc}
    \toprule
    Attention Dimension & ACC & F1 & MCC \\
    \midrule
    None & 53.5 & 52.4 & 0.069 \\
    Stock-wise & 71.3 & 69.2 & 0.495 \\
    Time-wise & 72.1 & 71.3 & 0.507 \\
    \textbf{Both (Higher Order)} & \textbf{72.9} & \textbf{72.2} & \textbf{0.516} \\
    \bottomrule
  \end{tabular}
\end{table}

Table \ref{tab:ablation_att} vividly demonstrates that while applying attention in any single dimension (either stock-wise or time-wise) improves performance metrics compared to using no attention, the most significant enhancements are observed when attention is simultaneously applied across both dimensions.

\section{Conclusion}
In this paper, we presented the Higher Order Transformers, a novel architecture tailored to predict stock movements by processing multimodal stock data. By expanding the self-attention mechanism and transformer architecture to incorporate higher-order interactions, our model adeptly captures the intricate dynamics of financial markets over both stock and time. To address computational constraints, we implemented low-rank approximations through tensor decomposition and integrated kernel attention to achieve linear computational complexity. Extensive testing on the Stocknet dataset demonstrated that our approach significantly surpasses most of the existing models in predicting stock movements. An ablation study further validated the effectiveness of specific architectural components, highlighting their contributory value to the model's performance. Looking ahead, we plan to train our model on other multimodal stock datasets such as ASTOCK \cite{zou2022astock} and Dhaka Stock Exchange \cite{Muhammad_2023}, and perform profitability analysis on real-world stock data to further test the practical application and financial viability of our proposed method.

\begin{acks}

\end{acks}

\bibliographystyle{ACM-Reference-Format}
\bibliography{sample-base}

%%% -*-BibTeX-*-
%%% Do NOT edit. File created by BibTeX with style
%%% ACM-Reference-Format-Journals [18-Jan-2012].

\begin{thebibliography}{39}

%%% ====================================================================
%%% NOTE TO THE USER: you can override these defaults by providing
%%% customized versions of any of these macros before the \bibliography
%%% command.  Each of them MUST provide its own final punctuation,
%%% except for \shownote{}, \showDOI{}, and \showURL{}.  The latter two
%%% do not use final punctuation, in order to avoid confusing it with
%%% the Web address.
%%%
%%% To suppress output of a particular field, define its macro to expand
%%% to an empty string, or better, \unskip, like this:
%%%
%%% \newcommand{\showDOI}[1]{\unskip}   % LaTeX syntax
%%%
%%% \def \showDOI #1{\unskip}           % plain TeX syntax
%%%
%%% ====================================================================

\ifx \showCODEN    \undefined \def \showCODEN     #1{\unskip}     \fi
\ifx \showDOI      \undefined \def \showDOI       #1{#1}\fi
\ifx \showISBNx    \undefined \def \showISBNx     #1{\unskip}     \fi
\ifx \showISBNxiii \undefined \def \showISBNxiii  #1{\unskip}     \fi
\ifx \showISSN     \undefined \def \showISSN      #1{\unskip}     \fi
\ifx \showLCCN     \undefined \def \showLCCN      #1{\unskip}     \fi
\ifx \shownote     \undefined \def \shownote      #1{#1}          \fi
\ifx \showarticletitle \undefined \def \showarticletitle #1{#1}   \fi
\ifx \showURL      \undefined \def \showURL       {\relax}        \fi
% The following commands are used for tagged output and should be
% invisible to TeX
\providecommand\bibfield[2]{#2}
\providecommand\bibinfo[2]{#2}
\providecommand\natexlab[1]{#1}
\providecommand\showeprint[2][]{arXiv:#2}

\bibitem[Abarbanell and Bushee(1997)]%
        {abarbanell1997fundamental}
\bibfield{author}{\bibinfo{person}{Jeffrey~S Abarbanell} {and} \bibinfo{person}{Brian~J Bushee}.} \bibinfo{year}{1997}\natexlab{}.
\newblock \showarticletitle{Fundamental analysis, future earnings, and stock prices}.
\newblock \bibinfo{journal}{\emph{Journal of accounting research}} \bibinfo{volume}{35}, \bibinfo{number}{1} (\bibinfo{year}{1997}), \bibinfo{pages}{1--24}.
\newblock


\bibitem[Abdelfattah et~al\mbox{.}(2024)]%
        {jtaer19010007}
\bibfield{author}{\bibinfo{person}{Bassant~A. Abdelfattah}, \bibinfo{person}{Saad~M. Darwish}, {and} \bibinfo{person}{Saleh~M. Elkaffas}.} \bibinfo{year}{2024}\natexlab{}.
\newblock \showarticletitle{Enhancing the Prediction of Stock Market Movement Using Neutrosophic-Logic-Based Sentiment Analysis}.
\newblock \bibinfo{journal}{\emph{Journal of Theoretical and Applied Electronic Commerce Research}} \bibinfo{volume}{19}, \bibinfo{number}{1} (\bibinfo{year}{2024}), \bibinfo{pages}{116--134}.
\newblock
\showISSN{0718-1876}
\urldef\tempurl%
\url{https://doi.org/10.3390/jtaer19010007}
\showDOI{\tempurl}


\bibitem[Araci(2019)]%
        {araci2019finbert}
\bibfield{author}{\bibinfo{person}{Dogu Araci}.} \bibinfo{year}{2019}\natexlab{}.
\newblock \bibinfo{title}{FinBERT: Financial Sentiment Analysis with Pre-trained Language Models}.
\newblock
\newblock
\showeprint[arxiv]{1908.10063}~[cs.CL]


\bibitem[Bollerslev(1986)]%
        {bollerslev1986generalized}
\bibfield{author}{\bibinfo{person}{Tim Bollerslev}.} \bibinfo{year}{1986}\natexlab{}.
\newblock \showarticletitle{Generalized autoregressive conditional heteroskedasticity}.
\newblock \bibinfo{journal}{\emph{Journal of econometrics}} \bibinfo{volume}{31}, \bibinfo{number}{3} (\bibinfo{year}{1986}), \bibinfo{pages}{307--327}.
\newblock


\bibitem[Brown(2004)]%
        {brown2004smoothing}
\bibfield{author}{\bibinfo{person}{Robert~Goodell Brown}.} \bibinfo{year}{2004}\natexlab{}.
\newblock \bibinfo{booktitle}{\emph{Smoothing, forecasting and prediction of discrete time series}}.
\newblock \bibinfo{publisher}{Courier Corporation}.
\newblock


\bibitem[Choromanski et~al\mbox{.}(2022)]%
        {choromanski2022rethinking}
\bibfield{author}{\bibinfo{person}{Krzysztof Choromanski}, \bibinfo{person}{Valerii Likhosherstov}, \bibinfo{person}{David Dohan}, \bibinfo{person}{Xingyou Song}, \bibinfo{person}{Andreea Gane}, \bibinfo{person}{Tamas Sarlos}, \bibinfo{person}{Peter Hawkins}, \bibinfo{person}{Jared Davis}, \bibinfo{person}{Afroz Mohiuddin}, \bibinfo{person}{Lukasz Kaiser}, \bibinfo{person}{David Belanger}, \bibinfo{person}{Lucy Colwell}, {and} \bibinfo{person}{Adrian Weller}.} \bibinfo{year}{2022}\natexlab{}.
\newblock \bibinfo{title}{Rethinking Attention with Performers}.
\newblock
\newblock
\showeprint[arxiv]{2009.14794}~[cs.LG]


\bibitem[Daiya and Lin(2021)]%
        {daiya2021stock}
\bibfield{author}{\bibinfo{person}{Divyanshu Daiya} {and} \bibinfo{person}{Che Lin}.} \bibinfo{year}{2021}\natexlab{}.
\newblock \showarticletitle{Stock movement prediction and portfolio management via multimodal learning with transformer}. In \bibinfo{booktitle}{\emph{ICASSP 2021-2021 IEEE International Conference on Acoustics, Speech and Signal Processing (ICASSP)}}. IEEE, \bibinfo{pages}{3305--3309}.
\newblock


\bibitem[Darcet et~al\mbox{.}(2024)]%
        {darcet2024vision}
\bibfield{author}{\bibinfo{person}{Timothée Darcet}, \bibinfo{person}{Maxime Oquab}, \bibinfo{person}{Julien Mairal}, {and} \bibinfo{person}{Piotr Bojanowski}.} \bibinfo{year}{2024}\natexlab{}.
\newblock \bibinfo{title}{Vision Transformers Need Registers}.
\newblock
\newblock
\showeprint[arxiv]{2309.16588}~[cs.CV]


\bibitem[Devlin et~al\mbox{.}(2019)]%
        {devlin2019bert}
\bibfield{author}{\bibinfo{person}{Jacob Devlin}, \bibinfo{person}{Ming-Wei Chang}, \bibinfo{person}{Kenton Lee}, {and} \bibinfo{person}{Kristina Toutanova}.} \bibinfo{year}{2019}\natexlab{}.
\newblock \bibinfo{title}{BERT: Pre-training of Deep Bidirectional Transformers for Language Understanding}.
\newblock
\newblock
\showeprint[arxiv]{1810.04805}~[cs.CL]


\bibitem[Ding et~al\mbox{.}(2015)]%
        {ding2015deep}
\bibfield{author}{\bibinfo{person}{Xiao Ding}, \bibinfo{person}{Yue Zhang}, \bibinfo{person}{Ting Liu}, {and} \bibinfo{person}{Junwen Duan}.} \bibinfo{year}{2015}\natexlab{}.
\newblock \showarticletitle{Deep learning for event-driven stock prediction}. In \bibinfo{booktitle}{\emph{Twenty-fourth international joint conference on artificial intelligence}}.
\newblock


\bibitem[Dosovitskiy et~al\mbox{.}(2021)]%
        {dosovitskiy2021image}
\bibfield{author}{\bibinfo{person}{Alexey Dosovitskiy}, \bibinfo{person}{Lucas Beyer}, \bibinfo{person}{Alexander Kolesnikov}, \bibinfo{person}{Dirk Weissenborn}, \bibinfo{person}{Xiaohua Zhai}, \bibinfo{person}{Thomas Unterthiner}, \bibinfo{person}{Mostafa Dehghani}, \bibinfo{person}{Matthias Minderer}, \bibinfo{person}{Georg Heigold}, \bibinfo{person}{Sylvain Gelly}, \bibinfo{person}{Jakob Uszkoreit}, {and} \bibinfo{person}{Neil Houlsby}.} \bibinfo{year}{2021}\natexlab{}.
\newblock \bibinfo{title}{An Image is Worth 16x16 Words: Transformers for Image Recognition at Scale}.
\newblock
\newblock
\showeprint[arxiv]{2010.11929}~[cs.CV]


\bibitem[Feng et~al\mbox{.}(2019)]%
        {feng2019temporal}
\bibfield{author}{\bibinfo{person}{Fuli Feng}, \bibinfo{person}{Xiangnan He}, \bibinfo{person}{Xiang Wang}, \bibinfo{person}{Cheng Luo}, \bibinfo{person}{Yiqun Liu}, {and} \bibinfo{person}{Tat-Seng Chua}.} \bibinfo{year}{2019}\natexlab{}.
\newblock \showarticletitle{Temporal relational ranking for stock prediction}.
\newblock \bibinfo{journal}{\emph{ACM Transactions on Information Systems (TOIS)}} \bibinfo{volume}{37}, \bibinfo{number}{2} (\bibinfo{year}{2019}), \bibinfo{pages}{1--30}.
\newblock


\bibitem[Hu and Singh(2021)]%
        {hu2021unit}
\bibfield{author}{\bibinfo{person}{Ronghang Hu} {and} \bibinfo{person}{Amanpreet Singh}.} \bibinfo{year}{2021}\natexlab{}.
\newblock \bibinfo{title}{UniT: Multimodal Multitask Learning with a Unified Transformer}.
\newblock
\newblock
\showeprint[arxiv]{2102.10772}~[cs.CV]


\bibitem[Hu et~al\mbox{.}(2018)]%
        {hu2018listening}
\bibfield{author}{\bibinfo{person}{Ziniu Hu}, \bibinfo{person}{Weiqing Liu}, \bibinfo{person}{Jiang Bian}, \bibinfo{person}{Xuanzhe Liu}, {and} \bibinfo{person}{Tie-Yan Liu}.} \bibinfo{year}{2018}\natexlab{}.
\newblock \showarticletitle{Listening to chaotic whispers: A deep learning framework for news-oriented stock trend prediction}. In \bibinfo{booktitle}{\emph{Proceedings of the eleventh ACM international conference on web search and data mining}}. \bibinfo{pages}{261--269}.
\newblock


\bibitem[Hu et~al\mbox{.}(2019)]%
        {hu2019listening}
\bibfield{author}{\bibinfo{person}{Ziniu Hu}, \bibinfo{person}{Weiqing Liu}, \bibinfo{person}{Jiang Bian}, \bibinfo{person}{Xuanzhe Liu}, {and} \bibinfo{person}{Tie-Yan Liu}.} \bibinfo{year}{2019}\natexlab{}.
\newblock \bibinfo{title}{Listening to Chaotic Whispers: A Deep Learning Framework for News-oriented Stock Trend Prediction}.
\newblock
\newblock
\showeprint[arxiv]{1712.02136}~[cs.SI]


\bibitem[Kalyani et~al\mbox{.}(2016)]%
        {kalyani2016stock}
\bibfield{author}{\bibinfo{person}{Joshi Kalyani}, \bibinfo{person}{Prof Bharathi}, \bibinfo{person}{Prof Jyothi}, {et~al\mbox{.}}} \bibinfo{year}{2016}\natexlab{}.
\newblock \showarticletitle{Stock trend prediction using news sentiment analysis}.
\newblock \bibinfo{journal}{\emph{arXiv preprint arXiv:1607.01958}} (\bibinfo{year}{2016}).
\newblock


\bibitem[Kertkeidkachorn et~al\mbox{.}(2023)]%
        {kertkeidkachorn2023finkg}
\bibfield{author}{\bibinfo{person}{Natthawut Kertkeidkachorn}, \bibinfo{person}{Rungsiman Nararatwong}, \bibinfo{person}{Ziwei Xu}, {and} \bibinfo{person}{Ryutaro Ichise}.} \bibinfo{year}{2023}\natexlab{}.
\newblock \showarticletitle{FinKG: A Core Financial Knowledge Graph for Financial Analysis}. In \bibinfo{booktitle}{\emph{2023 IEEE 17th International Conference on Semantic Computing (ICSC)}}. IEEE, \bibinfo{pages}{90--93}.
\newblock


\bibitem[Kim et~al\mbox{.}(2019)]%
        {kim2019hats}
\bibfield{author}{\bibinfo{person}{Raehyun Kim}, \bibinfo{person}{Chan~Ho So}, \bibinfo{person}{Minbyul Jeong}, \bibinfo{person}{Sanghoon Lee}, \bibinfo{person}{Jinkyu Kim}, {and} \bibinfo{person}{Jaewoo Kang}.} \bibinfo{year}{2019}\natexlab{}.
\newblock \showarticletitle{Hats: A hierarchical graph attention network for stock movement prediction}.
\newblock \bibinfo{journal}{\emph{arXiv preprint arXiv:1908.07999}} (\bibinfo{year}{2019}).
\newblock


\bibitem[Li et~al\mbox{.}(2020)]%
        {li2020multimodal}
\bibfield{author}{\bibinfo{person}{Qing Li}, \bibinfo{person}{Jinghua Tan}, \bibinfo{person}{Jun Wang}, {and} \bibinfo{person}{Hsinchun Chen}.} \bibinfo{year}{2020}\natexlab{}.
\newblock \showarticletitle{A multimodal event-driven LSTM model for stock prediction using online news}.
\newblock \bibinfo{journal}{\emph{IEEE Transactions on Knowledge and Data Engineering}} \bibinfo{volume}{33}, \bibinfo{number}{10} (\bibinfo{year}{2020}), \bibinfo{pages}{3323--3337}.
\newblock


\bibitem[Liu et~al\mbox{.}(2019)]%
        {liu-etal-2019-transformer}
\bibfield{author}{\bibinfo{person}{Jintao Liu}, \bibinfo{person}{Hongfei Lin}, \bibinfo{person}{Xikai Liu}, \bibinfo{person}{Bo Xu}, \bibinfo{person}{Yuqi Ren}, \bibinfo{person}{Yufeng Diao}, {and} \bibinfo{person}{Liang Yang}.} \bibinfo{year}{2019}\natexlab{}.
\newblock \showarticletitle{Transformer-Based Capsule Network For Stock Movement Prediction}. In \bibinfo{booktitle}{\emph{Proceedings of the First Workshop on Financial Technology and Natural Language Processing}}, \bibfield{editor}{\bibinfo{person}{Chung-Chi Chen}, \bibinfo{person}{Hen-Hsen Huang}, \bibinfo{person}{Hiroya Takamura}, {and} \bibinfo{person}{Hsin-Hsi Chen}} (Eds.). \bibinfo{address}{Macao, China}, \bibinfo{pages}{66--73}.
\newblock
\urldef\tempurl%
\url{https://aclanthology.org/W19-5511}
\showURL{%
\tempurl}


\bibitem[Matsunaga et~al\mbox{.}(2019)]%
        {matsunaga2019exploring}
\bibfield{author}{\bibinfo{person}{Daiki Matsunaga}, \bibinfo{person}{Toyotaro Suzumura}, {and} \bibinfo{person}{Toshihiro Takahashi}.} \bibinfo{year}{2019}\natexlab{}.
\newblock \showarticletitle{Exploring graph neural networks for stock market predictions with rolling window analysis}.
\newblock \bibinfo{journal}{\emph{arXiv preprint arXiv:1909.10660}} (\bibinfo{year}{2019}).
\newblock


\bibitem[Muhammad et~al\mbox{.}(2023)]%
        {Muhammad_2023}
\bibfield{author}{\bibinfo{person}{Tashreef Muhammad}, \bibinfo{person}{Anika~Bintee Aftab}, \bibinfo{person}{Muhammad Ibrahim}, \bibinfo{person}{Md.~Mainul Ahsan}, \bibinfo{person}{Maishameem~Meherin Muhu}, \bibinfo{person}{Shahidul~Islam Khan}, {and} \bibinfo{person}{Mohammad~Shafiul Alam}.} \bibinfo{year}{2023}\natexlab{}.
\newblock \showarticletitle{Transformer-Based Deep Learning Model for Stock Price Prediction: A Case Study on Bangladesh Stock Market}.
\newblock \bibinfo{journal}{\emph{International Journal of Computational Intelligence and Applications}} \bibinfo{volume}{22}, \bibinfo{number}{03} (\bibinfo{date}{April} \bibinfo{year}{2023}).
\newblock
\showISSN{1757-5885}
\urldef\tempurl%
\url{https://doi.org/10.1142/s146902682350013x}
\showDOI{\tempurl}


\bibitem[Rasul et~al\mbox{.}(2024)]%
        {rasul2024lagllama}
\bibfield{author}{\bibinfo{person}{Kashif Rasul}, \bibinfo{person}{Arjun Ashok}, \bibinfo{person}{Andrew~Robert Williams}, \bibinfo{person}{Hena Ghonia}, \bibinfo{person}{Rishika Bhagwatkar}, \bibinfo{person}{Arian Khorasani}, \bibinfo{person}{Mohammad Javad~Darvishi Bayazi}, \bibinfo{person}{George Adamopoulos}, \bibinfo{person}{Roland Riachi}, \bibinfo{person}{Nadhir Hassen}, \bibinfo{person}{Marin Biloš}, \bibinfo{person}{Sahil Garg}, \bibinfo{person}{Anderson Schneider}, \bibinfo{person}{Nicolas Chapados}, \bibinfo{person}{Alexandre Drouin}, \bibinfo{person}{Valentina Zantedeschi}, \bibinfo{person}{Yuriy Nevmyvaka}, {and} \bibinfo{person}{Irina Rish}.} \bibinfo{year}{2024}\natexlab{}.
\newblock \bibinfo{title}{Lag-Llama: Towards Foundation Models for Probabilistic Time Series Forecasting}.
\newblock
\newblock
\showeprint[arxiv]{2310.08278}~[cs.LG]


\bibitem[Sawhney et~al\mbox{.}(2020a)]%
        {sawhney2020deep}
\bibfield{author}{\bibinfo{person}{Ramit Sawhney}, \bibinfo{person}{Shivam Agarwal}, \bibinfo{person}{Arnav Wadhwa}, {and} \bibinfo{person}{Rajiv Shah}.} \bibinfo{year}{2020}\natexlab{a}.
\newblock \showarticletitle{Deep attentive learning for stock movement prediction from social media text and company correlations}. In \bibinfo{booktitle}{\emph{Proceedings of the 2020 Conference on Empirical Methods in Natural Language Processing (EMNLP)}}. \bibinfo{pages}{8415--8426}.
\newblock


\bibitem[Sawhney et~al\mbox{.}(2020b)]%
        {sawhney2020spatiotemporal}
\bibfield{author}{\bibinfo{person}{Ramit Sawhney}, \bibinfo{person}{Shivam Agarwal}, \bibinfo{person}{Arnav Wadhwa}, {and} \bibinfo{person}{Rajiv~Ratn Shah}.} \bibinfo{year}{2020}\natexlab{b}.
\newblock \showarticletitle{Spatiotemporal hypergraph convolution network for stock movement forecasting}. In \bibinfo{booktitle}{\emph{2020 IEEE International Conference on Data Mining (ICDM)}}. IEEE, \bibinfo{pages}{482--491}.
\newblock


\bibitem[Soun et~al\mbox{.}(2022)]%
        {slot}
\bibfield{author}{\bibinfo{person}{Yejun Soun}, \bibinfo{person}{Jaemin Yoo}, \bibinfo{person}{Minyong Cho}, \bibinfo{person}{Jihyeong Jeon}, {and} \bibinfo{person}{U Kang}.} \bibinfo{year}{2022}\natexlab{}.
\newblock \showarticletitle{Accurate Stock Movement Prediction with Self-supervised Learning from Sparse Noisy Tweets}. In \bibinfo{booktitle}{\emph{2022 IEEE International Conference on Big Data (Big Data)}}. \bibinfo{pages}{1691--1700}.
\newblock
\urldef\tempurl%
\url{https://doi.org/10.1109/BigData55660.2022.10020720}
\showDOI{\tempurl}


\bibitem[Su et~al\mbox{.}(2023)]%
        {su2023roformer}
\bibfield{author}{\bibinfo{person}{Jianlin Su}, \bibinfo{person}{Yu Lu}, \bibinfo{person}{Shengfeng Pan}, \bibinfo{person}{Ahmed Murtadha}, \bibinfo{person}{Bo Wen}, {and} \bibinfo{person}{Yunfeng Liu}.} \bibinfo{year}{2023}\natexlab{}.
\newblock \bibinfo{title}{RoFormer: Enhanced Transformer with Rotary Position Embedding}.
\newblock
\newblock
\showeprint[arxiv]{2104.09864}~[cs.CL]


\bibitem[Touvron et~al\mbox{.}(2023)]%
        {touvron2023llama}
\bibfield{author}{\bibinfo{person}{Hugo Touvron}, \bibinfo{person}{Thibaut Lavril}, \bibinfo{person}{Gautier Izacard}, \bibinfo{person}{Xavier Martinet}, \bibinfo{person}{Marie-Anne Lachaux}, \bibinfo{person}{Timothée Lacroix}, \bibinfo{person}{Baptiste Rozière}, \bibinfo{person}{Naman Goyal}, \bibinfo{person}{Eric Hambro}, \bibinfo{person}{Faisal Azhar}, \bibinfo{person}{Aurelien Rodriguez}, \bibinfo{person}{Armand Joulin}, \bibinfo{person}{Edouard Grave}, {and} \bibinfo{person}{Guillaume Lample}.} \bibinfo{year}{2023}\natexlab{}.
\newblock \bibinfo{title}{LLaMA: Open and Efficient Foundation Language Models}.
\newblock
\newblock
\showeprint[arxiv]{2302.13971}~[cs.CL]


\bibitem[Tsai et~al\mbox{.}(2019)]%
        {tsai2019multimodal}
\bibfield{author}{\bibinfo{person}{Yao-Hung~Hubert Tsai}, \bibinfo{person}{Shaojie Bai}, \bibinfo{person}{Paul~Pu Liang}, \bibinfo{person}{J.~Zico Kolter}, \bibinfo{person}{Louis-Philippe Morency}, {and} \bibinfo{person}{Ruslan Salakhutdinov}.} \bibinfo{year}{2019}\natexlab{}.
\newblock \bibinfo{title}{Multimodal Transformer for Unaligned Multimodal Language Sequences}.
\newblock
\newblock
\showeprint[arxiv]{1906.00295}~[cs.CL]


\bibitem[Vaswani et~al\mbox{.}(2023)]%
        {vaswani2023attention}
\bibfield{author}{\bibinfo{person}{Ashish Vaswani}, \bibinfo{person}{Noam Shazeer}, \bibinfo{person}{Niki Parmar}, \bibinfo{person}{Jakob Uszkoreit}, \bibinfo{person}{Llion Jones}, \bibinfo{person}{Aidan~N. Gomez}, \bibinfo{person}{Lukasz Kaiser}, {and} \bibinfo{person}{Illia Polosukhin}.} \bibinfo{year}{2023}\natexlab{}.
\newblock \bibinfo{title}{Attention Is All You Need}.
\newblock
\newblock
\showeprint[arxiv]{1706.03762}~[cs.CL]


\bibitem[Xie et~al\mbox{.}(2023)]%
        {xie2023wall}
\bibfield{author}{\bibinfo{person}{Qianqian Xie}, \bibinfo{person}{Weiguang Han}, \bibinfo{person}{Yanzhao Lai}, \bibinfo{person}{Min Peng}, {and} \bibinfo{person}{Jimin Huang}.} \bibinfo{year}{2023}\natexlab{}.
\newblock \bibinfo{title}{The Wall Street Neophyte: A Zero-Shot Analysis of ChatGPT Over MultiModal Stock Movement Prediction Challenges}.
\newblock
\newblock
\showeprint[arxiv]{2304.05351}~[cs.CL]


\bibitem[Xu and Cohen(2018)]%
        {xu2018stock}
\bibfield{author}{\bibinfo{person}{Yumo Xu} {and} \bibinfo{person}{Shay~B Cohen}.} \bibinfo{year}{2018}\natexlab{}.
\newblock \showarticletitle{Stock movement prediction from tweets and historical prices}. In \bibinfo{booktitle}{\emph{Proceedings of the 56th Annual Meeting of the Association for Computational Linguistics (Volume 1: Long Papers)}}. \bibinfo{pages}{1970--1979}.
\newblock


\bibitem[Yangjia et~al\mbox{.}(2022)]%
        {yangjia-etal-2022-fundamental}
\bibfield{author}{\bibinfo{person}{Zheng Yangjia}, \bibinfo{person}{Li Xia}, \bibinfo{person}{Ma Junteng}, {and} \bibinfo{person}{Chen Yuan}.} \bibinfo{year}{2022}\natexlab{}.
\newblock \showarticletitle{Fundamental Analysis based Neural Network for Stock Movement Prediction}. In \bibinfo{booktitle}{\emph{Proceedings of the 21st Chinese National Conference on Computational Linguistics}}, \bibfield{editor}{\bibinfo{person}{Maosong Sun}, \bibinfo{person}{Yang Liu}, \bibinfo{person}{Wanxiang Che}, \bibinfo{person}{Yang Feng}, \bibinfo{person}{Xipeng Qiu}, \bibinfo{person}{Gaoqi Rao}, {and} \bibinfo{person}{Yubo Chen}} (Eds.). \bibinfo{publisher}{Chinese Information Processing Society of China}, \bibinfo{address}{Nanchang, China}, \bibinfo{pages}{973--984}.
\newblock
\urldef\tempurl%
\url{https://aclanthology.org/2022.ccl-1.86}
\showURL{%
\tempurl}


\bibitem[Y{\i}ld{\i}r{\i}m et~al\mbox{.}(2021)]%
        {yildirim2021forecasting}
\bibfield{author}{\bibinfo{person}{Deniz~Can Y{\i}ld{\i}r{\i}m}, \bibinfo{person}{Ismail~Hakk{\i} Toroslu}, {and} \bibinfo{person}{Ugo Fiore}.} \bibinfo{year}{2021}\natexlab{}.
\newblock \showarticletitle{Forecasting directional movement of Forex data using LSTM with technical and macroeconomic indicators}.
\newblock \bibinfo{journal}{\emph{Financial Innovation}}  \bibinfo{volume}{7} (\bibinfo{year}{2021}), \bibinfo{pages}{1--36}.
\newblock


\bibitem[Yin and Yang(2016)]%
        {yin2016predicting}
\bibfield{author}{\bibinfo{person}{Libo Yin} {and} \bibinfo{person}{Qingyuan Yang}.} \bibinfo{year}{2016}\natexlab{}.
\newblock \showarticletitle{Predicting the oil prices: do technical indicators help?}
\newblock \bibinfo{journal}{\emph{Energy Economics}}  \bibinfo{volume}{56} (\bibinfo{year}{2016}), \bibinfo{pages}{338--350}.
\newblock


\bibitem[Zhang and Sennrich(2019)]%
        {zhang2019root}
\bibfield{author}{\bibinfo{person}{Biao Zhang} {and} \bibinfo{person}{Rico Sennrich}.} \bibinfo{year}{2019}\natexlab{}.
\newblock \bibinfo{title}{Root Mean Square Layer Normalization}.
\newblock
\newblock
\showeprint[arxiv]{1910.07467}~[cs.LG]


\bibitem[Zhang et~al\mbox{.}(2022)]%
        {ZHANG2022117239}
\bibfield{author}{\bibinfo{person}{Qiuyue Zhang}, \bibinfo{person}{Chao Qin}, \bibinfo{person}{Yunfeng Zhang}, \bibinfo{person}{Fangxun Bao}, \bibinfo{person}{Caiming Zhang}, {and} \bibinfo{person}{Peide Liu}.} \bibinfo{year}{2022}\natexlab{}.
\newblock \showarticletitle{Transformer-based attention network for stock movement prediction}.
\newblock \bibinfo{journal}{\emph{Expert Systems with Applications}}  \bibinfo{volume}{202} (\bibinfo{year}{2022}), \bibinfo{pages}{117239}.
\newblock
\showISSN{0957-4174}
\urldef\tempurl%
\url{https://doi.org/10.1016/j.eswa.2022.117239}
\showDOI{\tempurl}


\bibitem[Zhou et~al\mbox{.}(2021)]%
        {zhou2021informer}
\bibfield{author}{\bibinfo{person}{Haoyi Zhou}, \bibinfo{person}{Shanghang Zhang}, \bibinfo{person}{Jieqi Peng}, \bibinfo{person}{Shuai Zhang}, \bibinfo{person}{Jianxin Li}, \bibinfo{person}{Hui Xiong}, {and} \bibinfo{person}{Wancai Zhang}.} \bibinfo{year}{2021}\natexlab{}.
\newblock \bibinfo{title}{Informer: Beyond Efficient Transformer for Long Sequence Time-Series Forecasting}.
\newblock
\newblock
\showeprint[arxiv]{2012.07436}~[cs.LG]


\bibitem[Zou et~al\mbox{.}(2022)]%
        {zou2022astock}
\bibfield{author}{\bibinfo{person}{Jinan Zou}, \bibinfo{person}{Haiyao Cao}, \bibinfo{person}{Lingqiao Liu}, \bibinfo{person}{Yuhao Lin}, \bibinfo{person}{Ehsan Abbasnejad}, {and} \bibinfo{person}{Javen~Qinfeng Shi}.} \bibinfo{year}{2022}\natexlab{}.
\newblock \bibinfo{title}{Astock: A New Dataset and Automated Stock Trading based on Stock-specific News Analyzing Model}.
\newblock
\newblock
\showeprint[arxiv]{2206.06606}~[cs.CL]


\end{thebibliography}

\end{document}